\documentclass[sigconf, nonacm]{acmart}
\AtBeginDocument{%
  }

\usepackage{algorithm}
\usepackage{algorithmic}
\usepackage{makecell}
\usepackage{booktabs}

\usepackage[utf8]{inputenc}
\usepackage{amsmath}
\usepackage{amssymb}
\usepackage{array}
\usepackage{tabularx}
\usepackage{geometry}

\usepackage{multirow}   

\begin{document}

\title{ClockRoPE: Random Fourier Rotations for Temporal Routine Modeling}

\author{Yiwen Chen}
\affiliation{%
  \institution{YouTube}
  \country{}  
}
\email{ywenc@google.com}

\author{Joshua Ainslie}
\affiliation{%
 \institution{Google DeepMind}
  \country{}  
}
 \email{jainslie@google.com}

 \author{Krzysztof Choromanski}
\affiliation{%
  \institution{Google DeepMind}
  \country{}  
}
\email{kchoro@google.com}

\author{Xiang Gao}
\affiliation{%
  \institution{YouTube}
   \country{}  
}
\email{xggao@google.com}

\author{Su-Lin Wu}
\affiliation{%
  \institution{YouTube}
   \country{}  
}
\email{sulin@google.com}

\author{Yiping Yuan}
\affiliation{%
  \institution{YouTube}
   \country{}  
}
\email{yipingyuan@google.com}

\author{Qian Sun}
\affiliation{%
  \institution{YouTube}
  \country{}  
}
\email{qians@google.com}

\begin{abstract}
Rotary Position Embedding (RoPE) has been widely
adopted in transformer-based large language models. However, its log-linear frequency schedule, originally designed to produce long-term attention decay, limits its adoption in domains with more complex distance-correlation patterns, such as temporal periodicity in sequential recommendation. We investigate the expressiveness of general query/key rotations and find that any normalized continuous positive-definite attention modulation function can be approximated by random rotations induced by its own Fourier transform, which we term Random Fourier Rotations. Building on this theory, we propose ClockRoPE for routine modeling in sequential recommendation, where rotation frequencies are derived from periodic attention modulation functions.  In online A/B tests, ClockRoPE demonstrates consistent improvements in valued engagement metrics, and has been successfully deployed in production-scale generative retrieval system at a major video-sharing platform.
\end{abstract}







\maketitle

\section{Introduction}

As transformers\cite{vaswani2017attention} and the scaling law \cite{kaplan2020scaling, hoffmann2022training} have demonstrated great success in language modeling \cite{devlin2018bert, brown2020language} and image generation \cite{peebles2023scalable}, the generative recommendation field is converging towards transformer-style architectures for sequence modeling \cite{kang2018self, zhai2024actions}. However different from language modeling, time is a crucial and native signal in sequential recommendation. Various methods have been explored to incorporate temporal signals in transformer-based sequential recommendation, including  time difference input feature \cite{chai2025longer}, additive temporal attention bias \cite{ zhai2024actions} and relative time embedding within attention \cite{li2020time}.

On the other hand, RoPE \cite{su2021roformer} has shown superior performance in transformer-based language models, featuring properties such as translation-invariance and long-term attention decay. Despite being originally designed for 1D discrete token positions, RoPE has quickly seen adoption in domains with multi-dimensional continuous coordinates \cite{li2026nd,string-paper,ropes-for-vits}.

 Naturally, several recent work have argued for extending RoPE to encode time in sequential recommendation \cite{wei2025rotate, hou2026kunlun}: \cite{wei2025rotate} adopts similar log-linear frequency spacing as in standard RoPE and replaces position with timestamp delta from an anchor timestamp; ROTE \cite{hou2026kunlun} first computes log-scaled timestamp gaps and constructs rotation angles based on them to capture temporal proximity. Both work emphasize on using timestamp together with order to encode recency.

However, an important characteristic of user behavior in recommender systems is periodicity. 
Interactions occurring at the same hour of the day or the same day of the week often exhibit high correlations. Yet, the log-linear frequency schedule in standard RoPE produces long-term attention decay and struggles to capture more complex distance-correlation patterns including periodicity.

To address this limitation, we leverage Fourier analysis to investigate the expressiveness of general query/key rotations. We prove that any continuous normalized positive-definite modulation function can be unbiasedly approximated by sampling RoPE frequencies from their own Fourier transform distribution.  
In summary, the contributions of our work include:

\begin{itemize}
    \item  \textbf{Random Fourier Rotations (RFR):} a general sampling method for approximating arbitrary normalized positive-definite attention modulation functions using query/key rotations.
    \item \textbf{ClockRoPE:} A temporal periodicity encoding approach for user routine modeling, derived from applying Random Fourier Rotations to approximate periodic attention modulation functions.
    \item {\textbf{Production-Scale Evaluation:}} Empirical evidence of ClockRoPE's effectiveness in improving model's routine-awareness and overall viewer value within a production-scale generative
retrieval system. 
\end{itemize}

\section{Related Work}

\subsection{Relative Positional Encoding}
\textit{Relative Positional Encoding} (RPE) techniques \citep{shaw-rpe, HuangVUSHSDHDE19, fire, block-toeplitz, raffel} is a powerful class of positional encoding methods used in Transformer models. In contrast to \textit{absolute positional encodings} (APEs) \citep{vaswani2017attention, kazemi2019time2vec} that add / concatenate sinusoidal-based embeddings of the tokens' absolute positions with content embeddings, they more explicitly modulate attention matrix values via (learnable) functions of the relative positions / distances between the tokens in the corresponding metric space (e.g. 1D line as it is the case for language, 2D grids for images or 3D grids for videos). RPEs is a rich class of techniques and is often taxonomized via two branches: (1) the \textit{additive mechanisms} \citep{Fourier-RPE, block-toeplitz, fire} that introduce additive terms in the logits space (numerically equivalent to Hadamard multiplication obtained after element-wise logits exponentiation) and (2) the \textit{multiplicative mechanisms} \citep{su2021roformer, ropes-for-vits, string-paper} that modulate query-key dot-products by introducing multiplicative terms (those two strategies can also be combined into hybrid methods). RPEs are used across different modalities: language \citep{su2021roformer}, images \citep{block-toeplitz, ropes-for-vits, string-paper} and more recently, even point clouds \citep{relflexformer}.

RPEs can be often efficiently implemented, in time near-linear in the input sequence length (rather than quadratic). For instance, it is well known \citep{block-toeplitz, fast-rpe-base} that as long as the mask matrix $\mathbf{M}$ used in the Hadamard multiplication of the additive RPE mechanism supports sub-quadratic (in sequence length) matrix-vector multiplication, the corresponding RPE mechanism can be applied to linear low-rank attention Transformers \citep{performers} in sub-quadratic time.
Similarly, multiplicative RPE mechanisms often support efficient implementations (see: Sec. \ref{sec:related_work_rope}).


\subsection{RoPE \& STRING}
\label{sec:related_work_rope}
\textit{Rotary Position Encodings}, or: RoPE \cite{su2021roformer, ropes-for-vits, ropes-for-graphs, li2026nd, hua2024fourier}, is an instantiation of the multiplicative RPE mechanism, where the queries and keys are post-processed by rotations matrices that depend on the positions of their corresponding tokens. More specifically, a rotation matrix corresponding to the token is defined as a product of disjoint two-dimensional (\textit{Givens}) rotations with rotations angles defined as linear functions of tokens's positions (1D or higher-dimensional) with potentially trainable parameters (different variations include axial-RoPe, mixed-RoPE and more). This particular design choice for the rotation matrices corresponding to the tokens is a gateway to re-writing the modulated attention scores as: $\mathbf{q}^{\top}\mathbf{R}(\delta \mathbf{r})\mathbf{k}$, where $\mathbf{R}(\delta \mathbf{r})$ is a rotation matrix defined as a product of disjoint $2$-dimensional rotations with angles given as linear functions of the relative positions. 

The rigid structure of the rotations matrices used to rotate queries and keys in the RoPE mechanism leads to \textit{translation invariance} (the mechanism depends only on the relative positions), but in practice can be significantly relaxed without sacrificing this property. The recently introduced STRING method \citep{string-paper} is a super-set of RoPEs, where the rotation matrix is obtained by exponentiating (in the matrix-exponentiation rather than element-wise sense) \textit{skew-symmetric} (anti-symmetric) matrices. Each skew-symmetric matrix is defined as a linear combination of the skew-symmetric \textit{generators} (often learnable) with coefficients given by the coordinates of the tokens corresponding to queries/keys. As it is shown in \citep{string-paper}, as long as the generators commute, the resulting mechanism remains translation-invariant, with RoPE providing its special instantiation. Furthermore, it is the most general extension of the RoPE mechanism in the setting, where queries and keys are post-processed via rotation matrices and that keeps translation-invariance.

Both STRING and RoPE trivially support fast computations with efficient attention methods, since they do not require the explicit materialization of the logits matrix, acting independently on the set of queries and keys.

\subsection{Fourier Methods in Sequence Modeling}

Fourier methods are a common tool in sequence modeling. One early example for Transformer models in particular is FNet \cite{leethorp2021fnet}, which replaced the self-attention mechanism in Transformer layers with the discrete Fourier transform. Despite lacking learnable parameters, the discrete Fourier transform served as an efficient token "mixing" mechanism to replace attention while retaining much of the quality in BERT \cite{devlin2018bert} models.

Fourier analysis has also been used to improve RoPE itself. FoPE
\cite{hua2024fourier} leverages Discrete Signal Processing theory to improve length
generalization, while nD-RoPE \cite{li2026nd} derives a spectral isotropy
condition to generalize RoPE to higher-dimensional spaces. Both refine
RoPE's existing frequency structure, whereas ClockRoPE derives a new frequency-sampling distribution targeting periodic attention modulation.

\subsection{Fourier Features}

Fourier features have a broad history in kernel approximation and
function representation. Random Fourier Features (RFF) \cite{rahimi2007random} approximate shift-invariant
kernels via random sinusoidal projections sampled from the kernel's
Fourier transform. These features have since been shown to help networks learn
high-frequency functions~\cite{tancik2020fourier}, encode continuous
time in self-attention~\cite{xu2019self}, and support learnable
frequency bases for spatial positional
encoding~\cite{li2021learnable}. Unlike these methods,
which construct Fourier-based input features, our Random Fourier Rotations builds on RFF and extend from feature maps to rotation operators on queries and keys, preserving RoPE's multiplicative
structure and efficient implementation.

\section{Methodology}

\subsection{Motivation}
\label{sec:motivation}
\begin{figure}
    \centering
    \includegraphics[width=1.0\linewidth]{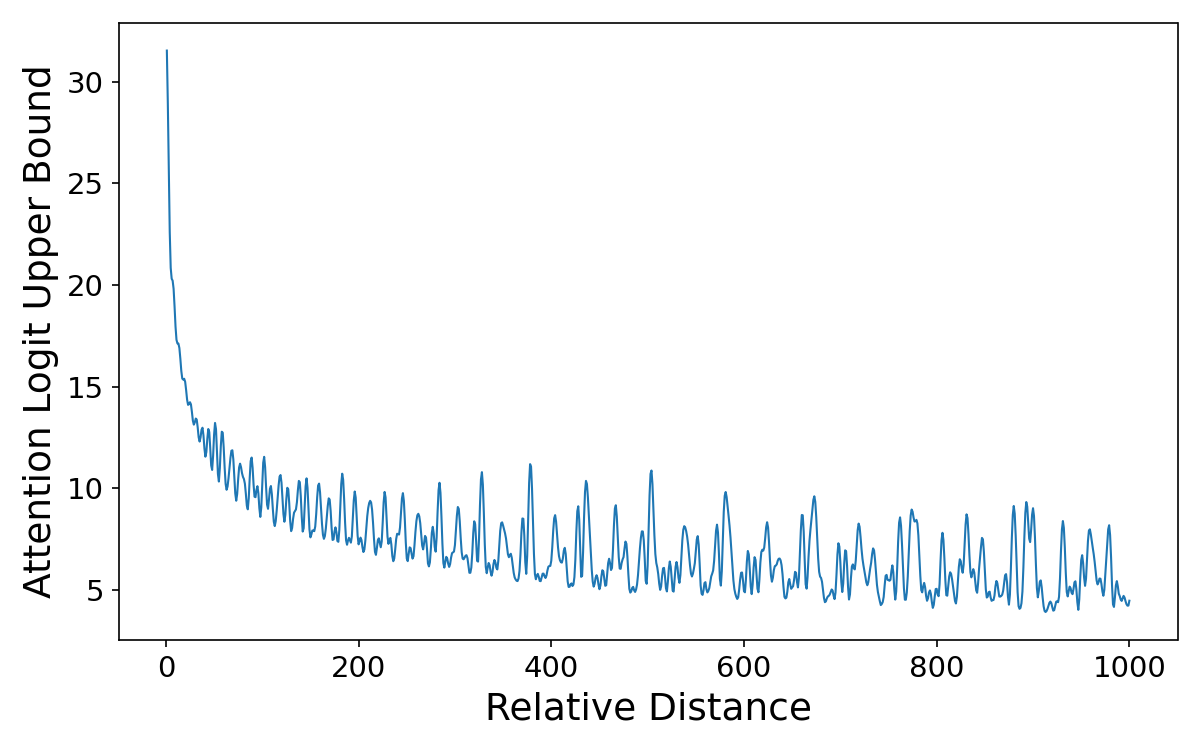}
    \caption{Long-term decay of RoPE}
    \label{fig:decay}
\end{figure}

RoPE  \cite{su2021roformer}  is built with 2d rotation per feature pair, which is inherently periodic. However as proved in the original paper \cite{su2021roformer} , when combining these 2d rotations with the standard log-linear spaced frequency schedule : $\theta_i = 10000^{-2i/d}, \quad i \in [0, 2, \ldots, d/2-1]$,  RoPE enforces attention score upper bound decay with relative distance, exhibiting a monotonically decreasing trend with random oscillations as shown in Figure \ref{fig:decay}. This property is termed as long-term decay in the original paper. Although the long-term decay property suits the nature of language modeling, it risks oversimplifying how distances affects correlation in domains with natural periodicity, such as generative recommendation. We empirically confirm this effect in \S\ref{sec:attn} Figure \ref{fig:attn}, where standard RoPE is shown to concentrate attention on recent interactions while obscuring the daily periodicity pattern.

 In sequential generative recommendation, interactions at similar hour of the day or day of the week exhibit high correlations. An ideal frequency schedule for periodicity modeling provides higher attention score upper bound for interaction pairs at similar hour-of-the-day or day-of-the-week than others.

\subsection{Modulating Attention Logits via Random Fourier Rotations}

Before jumping into our specific approach for periodic attention modulation, we first
introduce the theoretic foundation for modulating attention scores based on continuous token positions.

Suppose we want to modulate the standard attention logits such
that the similarity between a query $\boldsymbol{q}_m$ at 
position $p_m$ and a key $\boldsymbol{k}_n$ at position $p_n$ 
is weighted by a positive definite 
kernel $f: \mathbb{R} \to \mathbb{R}$  with $f(0) = 1$:

\begin{equation}
\boldsymbol{q}_m^\top \boldsymbol{k}_n \leadsto \boldsymbol{q}_m^\top \boldsymbol{k}_n f(p_m - p_n) 
\end{equation}
where $p_m, p_n \in \mathbb{R}$. 
For example, $f$ could be Gaussian: $f(x) = e^{-x^2}$,  Laplace: $f(x) = e^{-|x|}$ or trigonometric $f(x) = \cos (x)$.

We claim in the following proposition that any normalized continuous positive-definite modulation function can be unbiasedly approximated by sampling RoPE frequencies from their own Fourier transform distribution.
\begin{proposition}[Random Fourier Rotation Estimator]
Let $\xi_0, \xi_1, \ldots, \xi_{d/2-1} \overset{\mathrm{i.i.d.}}{\sim} 
\tau(\xi)$, where $\tau(\xi)$ is the Fourier transform of a continuous
positive definite kernel $f: \mathbb{R} \to \mathbb{R}$  with $f(0) = 1$ \footnote{ $f(0)=1$ is required to satisfy the normalized condition in Bochner's theorem ~\cite{bochner1933monotone}}. Let $\boldsymbol{q}_m^{(j)} = 
[\boldsymbol{q}_{m,2j}, \boldsymbol{q}_{m,2j+1}]^\top \in 
\mathbb{R}^2$ and $\boldsymbol{k}_n^{(j)} = [\boldsymbol{k}_{n,2j}, 
\boldsymbol{k}_{n,2j+1}]^\top \in \mathbb{R}^2$ denote the 
$j$-th consecutive feature pair of query $\boldsymbol{q}_m \in 
\mathbb{R}^d$ and key $\boldsymbol{k}_n \in \mathbb{R}^d$ 
respectively, for $j = 0, \ldots, d/2-1$. Then the Random 
Fourier Rotation estimator:
\begin{equation}
\hat{g}(\boldsymbol{q}_m, \boldsymbol{k}_n, p_m, p_n) = \sum_{j=0}^{d/2-1} (\boldsymbol{R}_m^{(j)} 
\boldsymbol{q}_m^{(j)})^\top (\boldsymbol{R}_n^{(j)} 
\boldsymbol{k}_n^{(j)})
\end{equation}

where 

\begin{equation}
\boldsymbol{R}_m^{(j)} = \begin{bmatrix} \cos(2\pi\xi_j p_m) & 
-\sin(2\pi\xi_j p_m) \\ \sin(2\pi\xi_j p_m) & 
\cos(2\pi\xi_j p_m) \end{bmatrix}
\end{equation}

\begin{equation}
\boldsymbol{R}_n^{(j)} = \begin{bmatrix} \cos(2\pi\xi_j p_n) & 
-\sin(2\pi\xi_j p_n) \\ \sin(2\pi\xi_j p_n) & 
\cos(2\pi\xi_j p_n) \end{bmatrix}
\end{equation}

is an unbiased estimator of $\boldsymbol{q}_m^\top 
\boldsymbol{k}_n f(p_m - p_n)$, i.e.:
\begin{equation}
\mathbb{E}_{\xi_0, \ldots, \xi_{d/2-1} \sim \tau(\xi)}
\left[\sum_{j=0}^{d/2-1} (\boldsymbol{R}_m^{(j)} 
\boldsymbol{q}_m^{(j)})^\top (\boldsymbol{R}_n^{(j)} 
\boldsymbol{k}_n^{(j)})\right] = \boldsymbol{q}_m^\top 
\boldsymbol{k}_n f(p_m - p_n)
\end{equation}
\end{proposition}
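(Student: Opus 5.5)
The plan is to reduce the statement to a single $2\times 2$ block, compute the expectation of that block's contribution, and sum over blocks by linearity of expectation. Only the marginal law of each $\xi_j$ matters, so the i.i.d.\ assumption is used only to give every block the same distribution.

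First, I would make precise what $\tau$ is, using Bochner's theorem \cite{bochner1933monotone}. A continuous positive-definite $f:\mathbb{R}\to\mathbb{R}$ is the Fourier transform of a finite nonnegative Borel measure $\tau$:
\begin{equation*}
f(x)=\int_{\mathbb{R}} e^{2\pi i \xi x}\, d\tau(\xi).
\end{equation*}
The condition $f(0)=1$ gives $\tau(\mathbb{R})=1$, so $\tau$ is a probability distribution and "$\xi_j\sim\tau$" is meaningful. I use the $2\pi$ convention to match the angles $2\pi\xi_j p$ in $\boldsymbol{R}_m^{(j)}$. Since $f$ is real-valued, taking real and imaginary parts gives two identities:
\begin{equation*}
\mathbb{E}_{\xi\sim\tau}[\cos(2\pi\xi x)]=f(x), \qquad \mathbb{E}_{\xi\sim\tau}[\sin(2\pi\xi x)]=0 .
\end{equation*}
Both expectations exist because the integrands are bounded.

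Second, I would do the per-block algebra. Write $R(\theta)$ for the planar rotation by angle $\theta$. Rotations are orthogonal and compose additively, so
\begin{equation*}
(\boldsymbol{R}_m^{(j)}\boldsymbol{q}_m^{(j)})^\top(\boldsymbol{R}_n^{(j)}\boldsymbol{k}_n^{(j)})=\boldsymbol{q}_m^{(j)\top}R\bigl(2\pi\xi_j(p_n-p_m)\bigr)\boldsymbol{k}_n^{(j)} .
\end{equation*}
Now split $R(\theta)=\cos\theta\, I+\sin\theta\, J$, where $J$ is the $90^\circ$ rotation. The block term becomes
\begin{equation*}
\cos\bigl(2\pi\xi_j\Delta\bigr)\,\boldsymbol{q}_m^{(j)\top}\boldsymbol{k}_n^{(j)}\;-\;\sin\bigl(2\pi\xi_j\Delta\bigr)\,\boldsymbol{q}_m^{(j)\top}J\boldsymbol{k}_n^{(j)},
\qquad \Delta=p_m-p_n .
\end{equation*}
Here I used that cosine is even and sine is odd in $\theta=-2\pi\xi_j\Delta$.

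Taking expectations with the two identities from the first step, the sine term vanishes and the cosine term gives $f(\Delta)\,\boldsymbol{q}_m^{(j)\top}\boldsymbol{k}_n^{(j)}$. Summing over $j=0,\dots,d/2-1$ reassembles $\boldsymbol{q}_m^\top\boldsymbol{k}_n\, f(p_m-p_n)$, which is the claim.

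I expect the only real obstacle to be the first step rather than the algebra. The statement calls $\tau$ "the Fourier transform" of $f$ as if it were a density. In general Bochner only gives a measure: for example, $f=\cos$ yields $\tfrac12(\delta_{1/2\pi}+\delta_{-1/2\pi})$. I would therefore state the argument for measures, with densities as a special case. I would also check that the frequency-convention factor of $2\pi$ is consistent with the rotation angles. Finally, I would note explicitly that the vanishing of the sine expectation is exactly where real-valuedness of $f$ enters, since it is equivalent to $\tau$ being symmetric.
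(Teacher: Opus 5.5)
Your proof is correct and follows the paper's argument: Bochner's theorem to write $f$ as an expectation over $\xi\sim\tau$, a per-block computation, and linearity of expectation. Your decomposition $R(\theta)=\cos\theta\, I+\sin\theta\, J$ together with $\mathbb{E}[\sin(2\pi\xi\Delta)]=0$ is the real-variable form of the paper's complex computation $\mathrm{Re}\bigl[\boldsymbol{q}_m^{(j)}{\boldsymbol{k}_n^{(j)}}^* e^{i2\pi\xi_j(p_m-p_n)}\bigr]$, where real-valuedness of $f$ enters only implicitly when $f$ is pulled out of $\mathrm{Re}$; reading $\tau$ as a measure is a mild improvement, since it also covers the discrete spectra of the periodic priors, which the paper treats separately via Herglotz's theorem in Corollary 3.3.
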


\begin{proof}
By Bochner's theorem~\cite{bochner1933monotone}, since $f$ is continuous
positive definite with $f(0) = 1$, 
its Fourier transform $\tau(\xi) = \int_{\mathbb{R}} f(x) 
e^{-i2\pi\xi x} dx$ is a proper probability density function 
satisfying $\tau(\xi) \geq 0$ and $\int_{\mathbb{R}} \tau(\xi) 
d\xi = f(0) = 1$. This allows us to express $f$ as an 
expectation over $\xi$ similar to the derivation in 
RFF~\cite{rahimi2007random}:

\begin{align}
f(p_m - p_n) &= \int_{\mathbb{R}} e^{i2\pi\xi(p_m-p_n)} 
\tau(\xi) d\xi \notag \\
&= \mathbb{E}_{\xi \sim \tau(\xi)}\left[e^{i2\pi\xi (p_m - p_n)} \right]
\end{align}

Grouping query and key features into $d/2$ pairs 
$\boldsymbol{q}_m^{(j)} = [\boldsymbol{q}_{m,2j}, 
\boldsymbol{q}_{m,2j+1}]^\top$ and $\boldsymbol{k}_n^{(j)} = 
[\boldsymbol{k}_{n,2j}, \boldsymbol{k}_{n,2j+1}]^\top$, we have:

\begin{equation}
\boldsymbol{q}_m^\top \boldsymbol{k}_n f(p_m - p_n) = 
\sum_{j=0}^{d/2-1} \left(\boldsymbol{q}_m^{(j)}\right)^\top 
\boldsymbol{k}_n^{(j)} f(p_m - p_n)
\end{equation}

Using the complex representation of 2d vector $\boldsymbol{q}_m^{(j)} \mapsto 
\boldsymbol{q}_{m,2j} + i\boldsymbol{q}_{m,2j+1}$, 
$\boldsymbol{k}_n^{(j)} \mapsto \boldsymbol{k}_{n,2j} + 
i\boldsymbol{k}_{n,2j+1}$, and the property 

\begin{equation}
\label{eq:dot}
\boldsymbol{q}_m^{(j)\top}\boldsymbol{k}_n^{(j)} = 
\mathrm{Re}[\boldsymbol{q}_m^{(j)}{\boldsymbol{k}_n^{(j)}}^*]
\end{equation}, where ${\boldsymbol{k}_n^{(j)}}^*$ is the complex conjugate of ${\boldsymbol{k}_n^{(j)}}$,
we arrive at:
\begin{align}
\boldsymbol{q}_m^\top \boldsymbol{k}_n f(p_m - p_n) 
&= \mathrm{Re}\left[\sum_{j=0}^{d/2-1} \boldsymbol{q}_m^{(j)} 
{\boldsymbol{k}_n^{(j)}}^* f(p_m - p_n)\right] \notag \\
&= \mathrm{Re}\left[\sum_{j=0}^{d/2-1} \boldsymbol{q}_m^{(j)} 
{\boldsymbol{k}_n^{(j)}}^* \mathbb{E}_{\xi \sim \tau(\xi)}
\left[e^{i2\pi\xi (p_m - p_n)}\right]\right] 
\end{align}

On the other hand, 

\begin{align}
&\mathbb{E}_{\xi_0,\ldots,\xi_{d/2-1} \sim \tau(\xi)}
\left[\sum_{j=0}^{d/2-1} (\boldsymbol{R}_m^{(j)} 
\boldsymbol{q}_m^{(j)})^\top (\boldsymbol{R}_n^{(j)} 
\boldsymbol{k}_n^{(j)})\right] \notag \\
&= \mathbb{E}_{\xi_0,\ldots,\xi_{d/2-1} \sim \tau(\xi)}
\left[\sum_{j=0}^{d/2-1} \mathrm{Re}\left[\left(
\boldsymbol{q}_m^{(j)} e^{i2\pi\xi_j p_m}\right)
\left(\boldsymbol{k}_n^{(j)} e^{i2\pi\xi_j p_n}
\right)^*\right]\right] \notag \\
&= \mathbb{E}_{\xi_0,\ldots,\xi_{d/2-1} \sim \tau(\xi)} \mathrm{Re}\left[\sum_{j=0}^{d/2-1}
\boldsymbol{q}_m^{(j)} {\boldsymbol{k}_n^{(j)}}^* e^{i2\pi\xi_j p_m}
 e^{-i2\pi\xi_j p_n}\right] \\
 &= \mathrm{Re}\left[\sum_{j=0}^{d/2-1}
\boldsymbol{q}_m^{(j)} {\boldsymbol{k}_n^{(j)}}^* \mathbb{E}_{\xi \sim \tau(\xi)} \left[e^{i2\pi\xi_j p_m}
 e^{-i2\pi\xi_j p_n} \right]\right] \\
&= \boldsymbol{q}_m^\top \boldsymbol{k}_n f(p_m - p_n) 
\end{align}

which completes the proof.
\end{proof}

Next, we investigate how fast Random Fourier Rotations converges as the number of feature pairs increases . 

\begin{proposition}[Convergence of Random Fourier Rotation Estimator]
Under the same setting as Proposition 3.1, for any $\epsilon > 0$:

\begin{align}
& P\left(\left| \frac{1}{d/2} \sum_{j=0}^{d/2-1} (\boldsymbol{R}_m^{(j)} 
\boldsymbol{q}_m^{(j)})^\top (\boldsymbol{R}_n^{(j)} 
\boldsymbol{k}_n^{(j)}) - \frac{1}{d/2}  \boldsymbol{q}_m^\top 
\boldsymbol{k}_n f(p_m - p_n) \right| \geq \epsilon
\right) \notag \\
& \leq 2\exp\left(-\frac{\epsilon^2 d^2}{8\sum_{j=0}^{d/2-1} (\left\|\boldsymbol{q}_m^{(j)}\right\| \left\|\boldsymbol{k}_n^{(j)}\right\| )^2}
\right)
\end{align}
\end{proposition}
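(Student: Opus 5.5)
Our plan is to use Hoeffding's inequality for the independent bounded random variables
\begin{equation*}
X_j = (\boldsymbol{R}_m^{(j)}\boldsymbol{q}_m^{(j)})^\top(\boldsymbol{R}_n^{(j)}\boldsymbol{k}_n^{(j)}), \qquad j = 0,\ldots,d/2-1.
\end{equation*}
Here $\boldsymbol{q}_m,\boldsymbol{k}_n,p_m,p_n$ are held fixed and the only randomness is in the i.i.d.\ frequencies $\xi_j\sim\tau$.

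\textbf{Step 1: independence.} Each $X_j$ depends only on $\xi_j$, so the $X_j$ are independent. They are not identically distributed, since the feature pairs differ across $j$; Hoeffding's inequality does not require this.

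\textbf{Step 2: boundedness.} Rotations are orthogonal, so $\|\boldsymbol{R}_m^{(j)}\boldsymbol{q}_m^{(j)}\| = \|\boldsymbol{q}_m^{(j)}\|$, and likewise for the key. Cauchy--Schwarz then gives
\begin{equation*}
X_j \in [-a_j, a_j], \qquad a_j := \|\boldsymbol{q}_m^{(j)}\|\,\|\boldsymbol{k}_n^{(j)}\|.
\end{equation*}

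\textbf{Step 3: centering.} The proof of Proposition 3.1 holds term by term, so the sum of the means is
\begin{equation*}
\sum_j \mathbb{E}[X_j] = \boldsymbol{q}_m^\top\boldsymbol{k}_n f(p_m-p_n).
\end{equation*}
Hence the quantity inside the probability is exactly $\tfrac{1}{d/2}\sum_j\bigl(X_j - \mathbb{E}X_j\bigr)$.

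\textbf{Step 4: apply Hoeffding.} For a sum of independent $X_j\in[-a_j,a_j]$, each range has width $2a_j$, so
\begin{equation*}
P\Bigl(\Bigl|\sum_j (X_j - \mathbb{E}X_j)\Bigr|\ge t\Bigr)\le 2\exp\Bigl(-\frac{2t^2}{\sum_j (2a_j)^2}\Bigr) = 2\exp\Bigl(-\frac{t^2}{2\sum_j a_j^2}\Bigr).
\end{equation*}
Set $t=\epsilon d/2$, since the event on the average is $|\cdot|\ge\epsilon$ after dividing by $d/2$. This gives $t^2 = \epsilon^2 d^2/4$, and the exponent becomes
\begin{equation*}
-\frac{\epsilon^2 d^2}{8\sum_j a_j^2},
\end{equation*}
which matches the stated bound.

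The only delicate point is the constant bookkeeping in Step 4: using range width $2a_j$ rather than $a_j$, and converting the $1/(d/2)$ normalization into $t = \epsilon d/2$. Once that is done, the rest is routine. A sharper interval for $X_j$ is not needed.
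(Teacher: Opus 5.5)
Your proof is correct and is essentially the paper's argument. The paper applies McDiarmid's inequality to the sum, using bounded-difference constants $c_j = 2\|\boldsymbol{q}_m^{(j)}\|\,\|\boldsymbol{k}_n^{(j)}\|$; for a sum of independent terms this is exactly your Hoeffding bound with range width $2a_j$, followed by the same rescaling $\epsilon \mapsto \epsilon d/2$.
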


The proof of Proposition 3.2 can be found in Appendix A.1

In particular, when $\sum_{j=0}^{d/2-1}((\boldsymbol{q}_m^{(j)})^\top
\boldsymbol{k}_n^{(j)})^2$ scales linearly with $d$, e.g. at the beginning of training,
Proposition~3.2 guarantees convergence of
\[
\frac{1}{d/2}\sum_{j=0}^{d/2-1}
(\boldsymbol{R}_m^{(j)}\boldsymbol{q}_m^{(j)})^\top
(\boldsymbol{R}_n^{(j)}\boldsymbol{k}_n^{(j)})
\]
to
\[
\frac{1}{d/2}\boldsymbol{q}_m^\top\boldsymbol{k}_n
f(p_m-p_n)
\] exponentially fast in $d$, 
enforcing the attention modulation prior $f$ from the start of training. See Appendix A.2 for more details

The Random Fourier Rotations algorithm is summarized in Algorithm \ref{alg:rfr}.

\subsubsection{Random Fourier Rotations for Periodic Functions}

We highlight a special case for periodic functions here as their Fourier transform result in discrete fourier series, forming a probability mass function instead of a density function in the general case. 

\begin{corollary}[Periodic Case via Herglotz's Theorem]
Let $f$ be a periodic continuous positive definite
kernel with period $T$, $f(0) = 1$.
The complex Fourier series of $f$ is
$$
\sum_{k = -\infty}^\infty \alpha_k e ^{i 2\pi k x / T}
$$

where
\begin{equation}
\alpha_k = \frac{1}{T} \int_0^T f(x) e^{- i 2\pi k x / T} \, dx = \frac{1}{T} \int_0^T f(x) \cos(2\pi kx/T)\, dx
\end{equation}

By Herglotz's theorem~\cite{herglotz1911uber},
the Fourier coefficients
$\{\alpha_k\}_{k=-\infty}^{\infty}$
satisfy $\alpha_k \geq 0$ and
$\sum_{k= -\infty}^\infty \alpha_k = f(0) = 1$, forming a valid
probability mass function over the discrete harmonics
$\{\xi_k = k/T\}_{k=-\infty}^\infty$. Consequently,
Propositions~3.1 and~3.2 apply directly with
frequencies sampled from $\{\xi_k = k/T\}_{k=-\infty}^\infty$
with PMF $P(\xi) = \sum_{k=-\infty}^\infty \alpha_k \delta(\xi - k/T)$.
\end{corollary}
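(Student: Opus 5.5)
The plan is to establish three facts and then reduce to the proofs of Propositions 3.1 and 3.2, noting that those proofs only use that $\tau$ is a probability law on $\mathbb{R}$ with characteristic-function identity $f(x)=\mathbb{E}_{\xi\sim\tau}[e^{i2\pi\xi x}]$. The three facts are: (i) the real-cosine form of $\alpha_k$; (ii) $\alpha_k\ge 0$ with $\sum_k\alpha_k=1$; (iii) the identity $f(x)=\sum_k \alpha_k e^{i2\pi kx/T}$ holding pointwise. Nothing else about $\tau$ being a density is used. Indeed, the proof of Proposition 3.1 uses only linearity of expectation and this identity. The proof of Proposition 3.2 (Hoeffding) uses only independence and boundedness of each summand by $\|\boldsymbol{q}_m^{(j)}\|\|\boldsymbol{k}_n^{(j)}\|$, which holds for any frequency since rotations are isometries.

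For (i), positive definiteness with real $f$ gives $f(-x)=\overline{f(x)}=f(x)$, so $f$ is even. Combined with periodicity, the sine part $\frac1T\int_0^T f(x)\sin(2\pi kx/T)\,dx$ vanishes: substitute $x\mapsto T-x$ and use $f(T-x)=f(-x)=f(x)$. Hence $\alpha_k$ is real and $\alpha_k=\alpha_{-k}$.

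For (ii), I would invoke Herglotz's theorem in its circle form. Setting $\phi(\theta)=f(T\theta/2\pi)$ gives a continuous positive definite function on $\mathbb{R}/2\pi\mathbb{Z}$. Herglotz's theorem then says its Fourier coefficients are nonnegative. As a direct check, $\alpha_k=\lim_{N\to\infty}\frac{1}{N^2}\sum_{a,b=1}^{N} f\bigl(\tfrac{(a-b)T}{N}\bigr)e^{-i2\pi k(a-b)/N}$ via Riemann sums, and each term is $\ge0$ by positive definiteness with coefficients $c_a=e^{-i2\pi ka/N}$.

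The main obstacle is (iii), namely summability and pointwise convergence of the series, since a merely continuous function's Fourier series need not converge. I would handle it with Fejér means. The Cesàro sums $\sigma_N f(0)=\sum_{|k|\le N}(1-\tfrac{|k|}{N+1})\alpha_k$ converge to $f(0)=1$ by Fejér's theorem for continuous $f$. Because $\alpha_k\ge0$, monotone convergence then gives $\sum_k\alpha_k=1<\infty$. Consequently the series converges absolutely and uniformly to a continuous function with the same Fourier coefficients as $f$, so it equals $f$ everywhere.

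It follows that $P(\xi)=\sum_k\alpha_k\delta(\xi-k/T)$ is a probability measure with $\mathbb{E}_{\xi\sim P}[e^{i2\pi\xi(p_m-p_n)}]=f(p_m-p_n)$. Substituting this into the proofs of Propositions 3.1 and 3.2, with integrals replaced by sums, yields the corollary verbatim.
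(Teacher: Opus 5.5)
Your proposal is correct and follows the paper's outline: positive definiteness makes $\{\alpha_k\}$ a probability mass function on $\{k/T\}$, and then Propositions~3.1 and~3.2 go through with the integral against $\tau$ replaced by a sum. The difference is that you prove what the paper only cites.

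The paper's whole argument is an appeal to Herglotz's theorem for $\alpha_k\ge 0$ and $\sum_k\alpha_k=f(0)=1$. It states the cosine form of $\alpha_k$ without justification.

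You supply the missing steps:
\begin{itemize}
\item You derive the cosine form from the evenness of $f$.
\item You prove $\alpha_k\ge 0$ directly, by writing $\alpha_k$ as a limit of Hermitian forms $\frac{1}{N^2}\sum_{a,b}c_a\bar c_b f(x_a-x_b)$.
\item You use Fej\'er means at $0$ with monotone convergence to get $\sum_k\alpha_k=1$ and uniform convergence of the Fourier series to $f$.
\end{itemize}
Together these steps amount to a self-contained proof of the circle form of the Herglotz--Bochner theorem.

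Your route has two advantages.
\begin{itemize}
\item It isolates and settles the one delicate point that a bare citation hides. The Fourier series of a merely continuous $f$ must actually represent $f$ pointwise, and this is exactly what $\mathbb{E}_{\xi\sim P}[e^{i2\pi\xi x}]=f(x)$ requires.
\item It makes the paper's ``apply directly'' precise. Propositions~3.1 and~3.2 use only this characteristic-function identity plus independence and boundedness of the summands. Hoeffding for independent sums gives the same bound as the paper's McDiarmid step. They do not need a density $\tau$, and for periodic $f$ no such density exists, since $f\notin L^1(\mathbb{R})$.
\end{itemize}
The paper's route gives up this verification in exchange for brevity.
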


\begin{algorithm}
\caption{Random Fourier Rotations for Attention Modulation}\label{alg:rfr}
\begin{algorithmic}[1]
\REQUIRE A continuous positive definite attention modulating prior
$f$ with $f(0) = 1$, embedding 
dimension $d$, continuous positions $p_m, p_n \in \mathbb{R}$
\ENSURE Rotation matrice $\boldsymbol{R}_m^d$ and $\boldsymbol{R}_n^d$ such that 
$(\boldsymbol{R}_m^d \boldsymbol{q}_m)^\top
(\boldsymbol{R}_n^d \boldsymbol{k}_n) \approx 
\boldsymbol{q}_m^\top \boldsymbol{k}_n f(p_m - p_n)$

\STATE Compute the Fourier transform of $f$: 
$$\tau(\xi) = \int_{\mathbb{R}} f(x) e^{-i2\pi\xi x} dx$$
\STATE Draw $d/2$ i.i.d. samples $\xi_0, \xi_1, \ldots, 
\xi_{d/2-1} \sim \tau(\xi)$
\STATE For each feature pair $j = 0, \ldots, d/2-1$, 
construct rotation matrices:
$$\boldsymbol{R}_m^{(j)} = 
\begin{bmatrix} 
\cos(2\pi\xi_j p_m) & -\sin(2\pi\xi_j p_m) \\ 
\sin(2\pi\xi_j p_m) & \cos(2\pi\xi_j p_m) 
\end{bmatrix}$$

$$
\boldsymbol{R}_n^{(j)} = \begin{bmatrix} \cos(2\pi\xi_j p_n) & 
-\sin(2\pi\xi_j p_n) \\ \sin(2\pi\xi_j p_n) & 
\cos(2\pi\xi_j p_n) \end{bmatrix}
$$

stack $\{\boldsymbol{R}_m^{(j)}\}_{j=0}^{d/2-1}$ and $\{\boldsymbol{R}_n^{(j)}\}_{j=0}^{d/2-1}$ in a block diagonal fashion to form $\boldsymbol{R}_m^d$ and $\boldsymbol{R}_n^d$ respectively.
\end{algorithmic}
\end{algorithm}

\subsection{ClockRoPE: Application in Capturing User Routines}

\begin{table*}[htbp]
  \centering
  \caption{ClockRoPE Sampling Distributions}
  \label{tab:dist}
  \renewcommand{\arraystretch}{1.2} 

  \begin{tabular}{lcl} 
    \toprule
    \textbf{Distributions} & \textbf{Form} & \textbf{Parameters} \\
    \midrule
    cosine-sym & $ \frac{1}{2} \delta\left(\xi - \frac{1}{T}\right) + \frac{1}{2} \delta\left(\xi + \frac{1}{T}\right)$ & no parameter \\
   gaussian-sym  & $\text{Categorical}\left(\frac{\alpha_k}{\sum_{u=-s}^{s} \alpha_u}\right)_{k=-s}^{s}$  & truncation size $s$, gaussian std $\sigma$ \\
cosine-fold & $ \delta\left(\xi - \frac{1}{T}\right)$ & no parameter \\
   gaussian-fold & $\text{Categorical}\left(\frac{\beta_k}{\sum_{u=0}^{s} \beta_u}\right)_{k=0}^{s}$, $\beta_k = 2 \alpha_k$ when $k > 0$; $\beta_k = \alpha_k$ when $k = 0$ & truncation size $s$, gaussian std $\sigma$ \\
    \bottomrule
  \end{tabular}
\end{table*}

To capture user routines with period T, we consider the following two attention modulation kernel:  cosine prior and periodic gaussian prior.

\subsubsection{Cosine Prior.}

The simplest symmetric periodic function we can come up with is cosine with period $T$.

\begin{equation}
\begin{aligned}
 f(t) = cos(\frac{2 \pi t}{T})
\end{aligned}
\end{equation}

 Appling Corollary 3.3 yields the binary distribution for RFR frequencies:

\begin{equation}
    P(\xi) = \frac{1}{2} \delta\left(\xi - \frac{1}{T}\right) + \frac{1}{2} \delta\left(\xi + \frac{1}{T}\right)
\end{equation}
where $\delta(\cdot)$ denotes the Dirac delta function.

\subsubsection{Periodic Gaussian Prior.}

To provide more control over the temporal receptive field, we consider a  periodic gaussian prior.

\begin{equation}
\begin{aligned}
    & f(t) = e^{- \frac{1}{2}\left(\frac{t}{\sigma}\right)^2}, \quad t \in
[-\frac{T}{2}, \frac{T}{2}]\\
    & f(t) = f(t + T)
\end{aligned}
\end{equation}

A larger $\sigma$ produces a broader attention
peak, allowing interactions further apart in time to remain
correlated, while a smaller $\sigma$ enforces sharper, more localized attention.

When $\sigma \ll T/2$,  $f$ is positive
definite and we can then apply Corollary~3.3 \footnote{The periodic Gaussian
prior $f(t) = \exp(-t^2/2\sigma^2)$ is positive
definite when $\sigma \ll T/2$. Positive
definiteness requires all Fourier coefficients
$\alpha_k \geq 0$, which holds since
$\alpha_k = \frac{\sigma\sqrt{2\pi}}{T}
\exp(-2\pi^2\sigma^2k^2/T^2) \geq 0$ for all
$k \in \mathbb{Z}$. The condition $\sigma \ll T/2$
ensures the periodic extension of $f$ is smooth,
i.e. $f(\pm T/2) = \exp(-T^2/8\sigma^2) \approx 0$,
so that $f$ behaves as a proper Gaussian on each
period. For our choice
$\sigma \leq 2$\,hr and $T = 24$\,hr,
$f(\pm 12\text{hr}) = \exp(-18) \approx 10^{-8}$,
confirming positive definiteness to numerical
precision.}. The Fourier Random Rotation
matrices can be formed by sampling discrete harmonics ${\{k/T\}}_{k=-\infty}^{\infty}$ as the rotation frequencies from probability mass function $P(\xi = k/T)=\alpha_k$, where $\alpha_k$ is the Fourier coefficient of harmonic $k/T$:

\begin{align}
\label{eq:inverse}
\alpha_k & =\dfrac{1}{T} \int_{-T/2}^{T/2}
 e^{- \frac{1}{2}\left(\frac{t}{\sigma}\right)^2}\cos\left(2\pi k t/T\right) dt \notag \\
& \approx \frac{\sigma\sqrt{2\pi}}{T}
e^{-\frac{2\pi^2\sigma^2k^2}{T^2}} \\
& = \frac{1}{\sigma^* \sqrt{2\pi} } e^{- \frac{1}{2} \left(\frac{k}{\sigma^*}\right)^2 }, \quad \sigma* = \frac{T}{2 \pi \sigma} \notag
\end{align}

The approximation in \eqref{eq:inverse} comes from extending the integration limits
$[-T/2, T/2] $ to $(-\infty, \infty)$, valid
when $\sigma \ll T/2$. \footnote{For $\sigma = 2$\,hr and
$T = 24$\,hr, the boundary value
$f(\pm T/2) = \exp(-18) \approx 10^{-8}$
confirms this approximation is exact to
numerical precision.} The coefficients decay
exponentially in $k^2$,
guaranteeing rapid convergence of the truncated series and
justifying the use of a small truncation size $s \ll \infty$
in practice.

With truncation to ${\{k/T\}}_{k=-s}^{s}$, we re-normalize the probability mass function, sample:
\begin{equation}
k_0^{(\ell)}, \ldots, k_{d/2-1}^{(\ell)} \sim
\text{Categorical}\left(\frac{\alpha_k}{\sum_{u=-s}^{s} \alpha_u}\right)_{k=-s}^{s},
\quad \ell = 1, \ldots, L
\end{equation} and set $\xi_j^{(\ell)} \leftarrow
    k_j^{(\ell)} / T$ for $j = 0, \ldots, d/2-1$ in each attention layer $l$.

We compare different truncation size $s$ and variance $\sigma$ in ablation study \S\ref{sec:truncation}.

\subsubsection{Direction Awareness with Folded Frequency Distribution}
\label{sec:fold}

The frequency distributions we have derived so far from cosine prior and periodic gaussian prior are all symmetric.
In this section, we argue that sampling from folded frequency distributions maintains the modulation effect as their symmetric counterparts and has additional benefit of directional awareness. To fold a distribution, we perform 
\begin{equation}
P_{\text{fold}}(x) = 
\begin{cases} 
    2P(x) & \text{for } x > 0 \\
    P(x) & \text{for } x = 0 \\
    0 & \text{otherwise}
\end{cases}
\end{equation}
Table \ref{tab:dist} shows all the distributions we have discussed so far and their folded versions.

We first show that folded distributions maintain
the modulation effect of the symmetric ones. We have the following expression for attention logit modulation:

\begin{align}
    &(\boldsymbol{R}_m^{(j)} \boldsymbol{q}_m^{(j)})^\top
(\boldsymbol{R}_n^{(j)} \boldsymbol{k}_n^{(j)}) \\ &= (\begin{bmatrix} 
\cos(2\pi\xi_j p_m) & -\sin(2\pi\xi_j p_m) \\ 
\sin(2\pi\xi_j p_m) & \cos(2\pi\xi_j p_m) 
\end{bmatrix} \begin{bmatrix} 
q_{m,2j}  \\ 
q_{m,2j+1}
\end{bmatrix})^T  \\ &\quad \quad \begin{bmatrix} 
\cos(2\pi\xi_j p_n) & -\sin(2\pi\xi_j p_n) \\ 
\sin(2\pi\xi_j p_n) & \cos(2\pi\xi_j p_n) 
\end{bmatrix} \begin{bmatrix} 
k_{n,2j}  \\ 
k_{n,2j+1}
\end{bmatrix} \\
&= (q_{m, 2j} k_{n, 2j}  + q_{m, 2j+1} k_{n, 2j+1}) \cos(2\pi\xi_j (p_m - p_n))\\ &\quad + (q_{m, 2j} k_{n, 2j+1}  - q_{m, 2j+1} k_{n, 2j}) \sin(2\pi\xi_j (p_m - p_n))
\end{align}

Similar to RoPE, ClockRoPE is designed to modulate the upper bound of attention logits: lowering the attention logit upper bound at other times so that through softmax, more attention can be concentrated to tokens during similar hour of the day or day of the week.  Attention logits get close to the upper bound when query and key are roughly aligned.

In this case, we have  $|q_{m, 2j} k_{n, 2j+1}  - q_{m, 2j+1} k_{n, 2j}| \ll |q_{m, 2j} k_{n, 2j}  + q_{m, 2j+1} k_{n, 2j+1}|$ in (24),  the dominating cosine term does not depend on the sign of the frequencies. We can thus replace all negative frequencies with its reverse and maintain the signal modulating effect.

Sampling from folded non-negative frequencies has additional benefits of allowing the model to be directional aware and ensuring all feature pairs share a consistent rotational orientation. This consistency could help the projection matrices learn a unified representation of temporal causality.

\begin{algorithm}
\caption{ClockRoPE for 
Routine Modeling} \label{alg:clock}
\begin{algorithmic}[1]
\STATE \textbf{// Network initialization: Sample per-layer frequencies}
\FOR{each attention layer $\ell = 1, \ldots, L$}
    \STATE Set $ seed_\ell \leftarrow seed_{base} + \ell$
    \STATE Draw $d/2$ i.i.d. samples with $ seed_\ell$ :
    $$\xi_1^{(\ell)}, \ldots, \xi_{d/2}^{(\ell)} \sim 
    \text{ClockRoPE Sampling Distribution}$$
    \STATE Form rotation frequency vector $\Theta^{(\ell)} \leftarrow \left(\xi_1^{(\ell)}, \xi_2^{(\ell)}, \ldots, \xi_{d/2}^{(\ell)}\right) $
\ENDFOR
\STATE \textbf{// Forward pass: apply ClockRoPE rotation (RoPE-style efficient realization)}
\FOR{each layer $\ell = 1, \ldots, L$}
    \STATE Compute rotation angles $2\pi \Theta^{(\ell)} p_m,  2\pi \Theta^{(\ell)} p_n $ and expand them to $r^{(\ell)}_m, r^{(\ell)}_n \in \mathbb{R}^d$  by repeating each element twice.
    \STATE Define $\text{rotate\_pair}(\boldsymbol{v}) \leftarrow (-v_2, v_1, -v_4, v_3, \ldots, -v_d, v_{d-1})$
    \STATE $\boldsymbol{q}_m^{(\ell)} \leftarrow \boldsymbol{q}_m^{(\ell)} \otimes \cos\left(r^{(\ell)}_m\right) + \text{rotate\_pair}\left(\boldsymbol{q}_m^{(\ell)}\right) \otimes \sin\left(r^{(\ell)}_m\right)$
    \STATE $\boldsymbol{k}_n^{(\ell)} \leftarrow \boldsymbol{k}_n^{(\ell)} \otimes \cos\left(r^{(\ell)}_n\right) + \text{rotate\_pair}\left(\boldsymbol{k}_n^{(\ell)}\right) \otimes \sin\left(r^{(\ell)}_n\right)$
\ENDFOR
\end{algorithmic}
\end{algorithm}

\subsubsection{Combine Multiple Distributions}

To capture multiple periods e.g. daily and weekly, we can
(1) combine ClockRoPE sampling distributions with a weighted sum of respective pmf
 or (2) shard features based on the importance of each periodicity and apply corresponding sampling distribution to each feature group. For simplicity, we took the second approach in our experiments.



The ClockRoPE algorithm is summarized in Algorithm \ref{alg:clock}. In practice, we sample ClockRoPE frequencies separately for each head in multi-head attention.

\section{Experiments}

In this section, we evaluate ClockRoPE on production-scale generative
retrieval task at a major video-sharing platform. We experiment with ClockRoPE and other baseline methods on top of a transformer-style
base model, which predicts users' next interaction given their historic interaction sequences. This foundation model powers top-k retrieval from a massive video corpus on both watch page and home page, and serves as the upstream of the more fine-grained ranking model. 

We aim to answer the following questions through experiments:

\begin{enumerate}
  \item How does ClockRoPE affect attention distribution ?
  \item Does the attention distribution change benefit viewer satisfaction?
  \item How does ClockRoPE interact with RoPE ? 
\end{enumerate}

\subsection{Offline Evaluation}

\begin{table}[htbp]
  \centering
  \caption{Experiment group 1: no position embedding. Percentage MAP improvement over control.}
  \label{tab:youtube-offline1}
  \renewcommand{\arraystretch}{1.2} 

  \begin{tabular}{lrr} 
    \toprule
    \textbf{Method} & \textbf{MAP@1} & \textbf{MAP@50} \\
    \midrule
   Arm 1 (Fourier Features)      & +0.43\% & +0.29\% \\
   Arm 2 (ClockRoPE-cosine-sym)   & +1.89\% & +1.25\% \\
Arm 3 (ClockRoPE-cosine-fold) & +1.96\% & +1.37\% \\
    Arm 4 (ClockRoPE-gaussian-sym) & +3.19\% & +2.12\% \\
    Arm 5 (ClockRoPE-gaussian-fold) & +3.61\% & +2.25\% \\
    \bottomrule
  \end{tabular}
\end{table}

\begin{table}[htbp]
  \centering
  \caption{Experiment group 2: combining with RoPE. Percentage MAP improvement over RoPE only control.}
  \label{tab:youtube-offline2}
  \renewcommand{\arraystretch}{1.2} 

  \begin{tabular}{lrr} 
    \toprule
    \textbf{Method} & \textbf{MAP@1} & \textbf{MAP@50} \\
    \midrule
    Arm 1 (ClockRoPE-cosine-fold) & +1.02\% & +1.17\% \\
   Arm 2 (ClockRoPE-gaussian-sym)  & +2.18\% & +1.91\% \\
    Arm 3 (ClockRoPE-gaussian-fold) & +2.39\% & +2.00\% \\
    \bottomrule
  \end{tabular}
\end{table}

We conducted two groups of experiments to investigate ClockRoPE's effectiveness with and without RoPE. 

\subsubsection{Experiment Group 1 }
All the variants in this group use absolute timestamp embedding and no position embedding. For ClockRoPE-gaussian methods, we used parameter sweep to select truncation size and variance. 
\begin{itemize}
  \item \textbf{Control:}  Hour-of-the-day and day-of-the-week input features : $ [ t \bmod T_d, \quad \left\lfloor \frac{t}{T_d} \right\rfloor \bmod \frac{T_w}{T_d} ] $ . This is the production baseline.
  \item \textbf{Arm 1 (Fourier Features):}  Daily and weekly Fourier input features\\ $[\cos(\frac{2\pi t}{T_d}), \sin(\frac{2\pi t}{T_d}), \cos(\frac{2\pi t}{T_w}), \sin(\frac{2\pi t}{T_w})]$. This periodic encoding has the nice property of translation invariance as the cosine difference formula ensures that dot product between two such feature vectors depends only on the relative temporal distance. 
   \item \textbf{Arm 2 (ClockRoPE-cosine-sym):} We divide all features into two halves and apply ClockRoPE-cosine-sym with daily period and weekly period to each half respectively.
    \item \textbf{Arm 3 (ClockRoPE-cosine-fold):} Same as Arm 2 except for using cosine-fold sampling distribution.
   \item \textbf{Arm 4 (ClockRoPE-gaussian-sym):} Same as Arm 2 except for using gaussian-sym sampling distribution.
   \item \textbf{Arm 5 (ClockRoPE-gaussian-fold):} Same as Arm 2 except for using gaussian-fold sampling distribution.

\end{itemize}

As shown in Table \ref{tab:youtube-offline1}, ClockRoPE-gaussian variants consistently outperform ClockRoPE-cosine variants. A potential explanation
lies in the range of the two priors: the gaussian prior is
non-negative and decays smoothly toward $0$ as temporal distance $t \bmod T$
approaches $T/2$. The cosine prior, in
contrast, ranges over $[-1, 1]$ and reaches $-1$ at $t = T/2$,
actively inverting the sign of the attention logit for maximally
out-of-phase pairs rather than merely suppressing it. This sign
inversion can inject additional noise into the attention
distribution.

 We selected the best performing variants from group 1 and experimented with combining them with RoPE in the next section.

\subsubsection{Experiment Group 2 }
\label{sec:group2}
In the second group, we evaluated how ClockRoPE interacts with RoPE.  All the variants in this group use both absolute timestamp embedding and standard RoPE for  position embedding. We combined ClockRoPE with RoPE by applying RoPE to half of the hidden features and ClockRoPE to the other half. Other ways of combining RoPE and ClockRoPE are discussed in Ablation \S\ref{sec:truncation}.

\begin{itemize}
  \item \textbf{Control:} Hour-of-the-day and day-of-the-week input features : $ [ t \bmod T_d, \quad \left\lfloor \frac{t}{T_d} \right\rfloor \bmod \frac{T_w}{T_d} ] $ with standard RoPE .
  \item \textbf{Arm 1 (ClockRoPE-cosine-fold):}  We divide all features
into two halves and apply RoPE to the first half. We further divide the second half evenly into two feature groups and apply ClockRoPE-cosine-fold with
daily period and weekly period to each group respectively.
   \item \textbf{Arm 2 (ClockRoPE-gaussian-sym):} Same as Arm 1 except for using gaussian-sym sampling distribution.
   \item \textbf{Arm 3 (ClockRoPE-gaussian-fold):} Same as Arm 1 except for using gaussian-fold sampling distribution.

\end{itemize}

Table \ref{tab:youtube-offline1}  and 
Table \ref{tab:youtube-offline2} show that ClockRoPE-gaussian-fold performs the best both with and without RoPE.

\subsection{Online A/B Tests}
\label{sec:online}

\begin{table}[htbp]
  \centering
  \caption{RoPE and ClockRoPE each independently improves sitewide valued engagement. The best result is achieved by combining RoPE and ClockRoPE.}
  \label{tab:youtube-eval}
  \renewcommand{\arraystretch}{1.2} 

  \begin{tabular}{lrr} 
    \toprule
    \textbf{Method} & \textbf{Engagement} & \textbf{Valued Engagement} \\
    \midrule
    RoPE Only        & +0.04\% & +0.06\% \\
    ClockRoPE Only   & +0.03\% & +0.05\% \\
    RoPE + ClockRoPE & +0.08\% & +0.08\% \\
    \bottomrule
  \end{tabular}
\end{table}

We conducted an online A/B test for 14 days, assigning $1\%$ total traffic to each variant. In this study, we used the best config from offline evaluations:  ClockRoPE-gaussian-fold, where ClockRoPE and RoPE are combined in the
same way as in \S\ref{sec:group2}, each applying to half of the features.

As shown in Table \ref{tab:youtube-eval},  RoPE and ClockRoPE are complementary to each other: 

\begin{itemize}
  \item Applying RoPE or ClockRoPE alone each helps improve sitewide valued engagement.
  \item  The best result is achieved by combining RoPE and ClockRoPE, leveraging both long-term attention decay of RoPE and periodicity-awareness of ClockRoPE. 

\end{itemize}

More analysis on how ClockRoPE interacts with RoPE in terms of attention distribution can be found in \S\ref{sec:attn}.

\subsection{Ablation Studies}

\subsubsection{Truncation Size and Variance in ClockRoPE Gaussian}
\label{sec:truncation}
We ablated ClockRoPE-gaussian-fold with single period $T=24$ hr on both truncation size $s$ and variance. The results are shown in Table \ref{tab:truncation}.  We see that tuning variance has a larger impact than truncation size. When the truncation size is large enough e.g. 6 in our case, keep increasing it does not yield extra performance gain. 

\begin{table}[htbp]
  \centering
  \caption{Percentage MAP improvement over prod control with varying truncation size and variance.}
  \label{tab:truncation}
  \renewcommand{\arraystretch}{1.2} 
  \begin{tabular}{lrr} 
    \toprule
    \textbf{Method} & \textbf{MAP@1} & \textbf{MAP@50} \\
    \midrule
 $s = 6$, $\sigma = 1.0$ hr & +0.95\% & +0.65\% \\
 $s = 6$, $\sigma = 1.5$ hr & +1.51\% & +0.85\% \\
\bfseries $\boldsymbol{s = 6}$, $\boldsymbol{\sigma = 2.0}$ \bfseries hr & \bfseries +2.12\% & \bfseries +2.05\% \\
 $s = 6$, $\sigma = 2.5$ hr & +1.95\% & +1.84\% \\
 $s = 3$, $\sigma = 2.0$ hr & +1.59\% & +1.12\% \\
 $s = 9$, $\sigma = 2.0$ hr & +1.88\% & +1.41\% \\
 $s = 12$, $\sigma = 2.0$ hr & +2.01\% & +1.94\% \\
    \bottomrule
  \end{tabular}
\end{table}

\subsubsection{Different Ways of Combining RoPE and ClockRoPE}
\label{sec:combine}
We compared two approaches:

\begin{itemize}
    \item Feature-wise division: rotate half hidden features with RoPE and the other half with ClockRoPE.
    \item  Head-wise division: rotate all features in half heads with RoPE and features in the other half  heads with ClockRoPE.
\end{itemize}

Table \ref{tab:combine} shows that combining RoPE and ClockRoPE feature-wise performs slightly better in offline evaluation.

\begin{table}[htbp]
  \centering
  \caption{Percentage MAP improvement over RoPE only control. Feature-wise division performs slightly better than Head-wise division.}
  \label{tab:combine}
  \renewcommand{\arraystretch}{1.2} 

  \begin{tabular}{lrr} 
    \toprule
    \textbf{Method} & \textbf{MAP@1} & \textbf{MAP@50} \\
    \midrule
   Feature-wise division  & +2.39\% & +2.00\% \\
Head-wise division & +1.94\% & +1.87\% \\
    \bottomrule
  \end{tabular}
\end{table}

\subsection{Visualization of Temporal Attention Distribution}

\subsubsection{How ClockRoPE Interacts with RoPE}
\label{sec:attn}

\begin{figure}
    \centering
\includegraphics[width=1.0\linewidth]{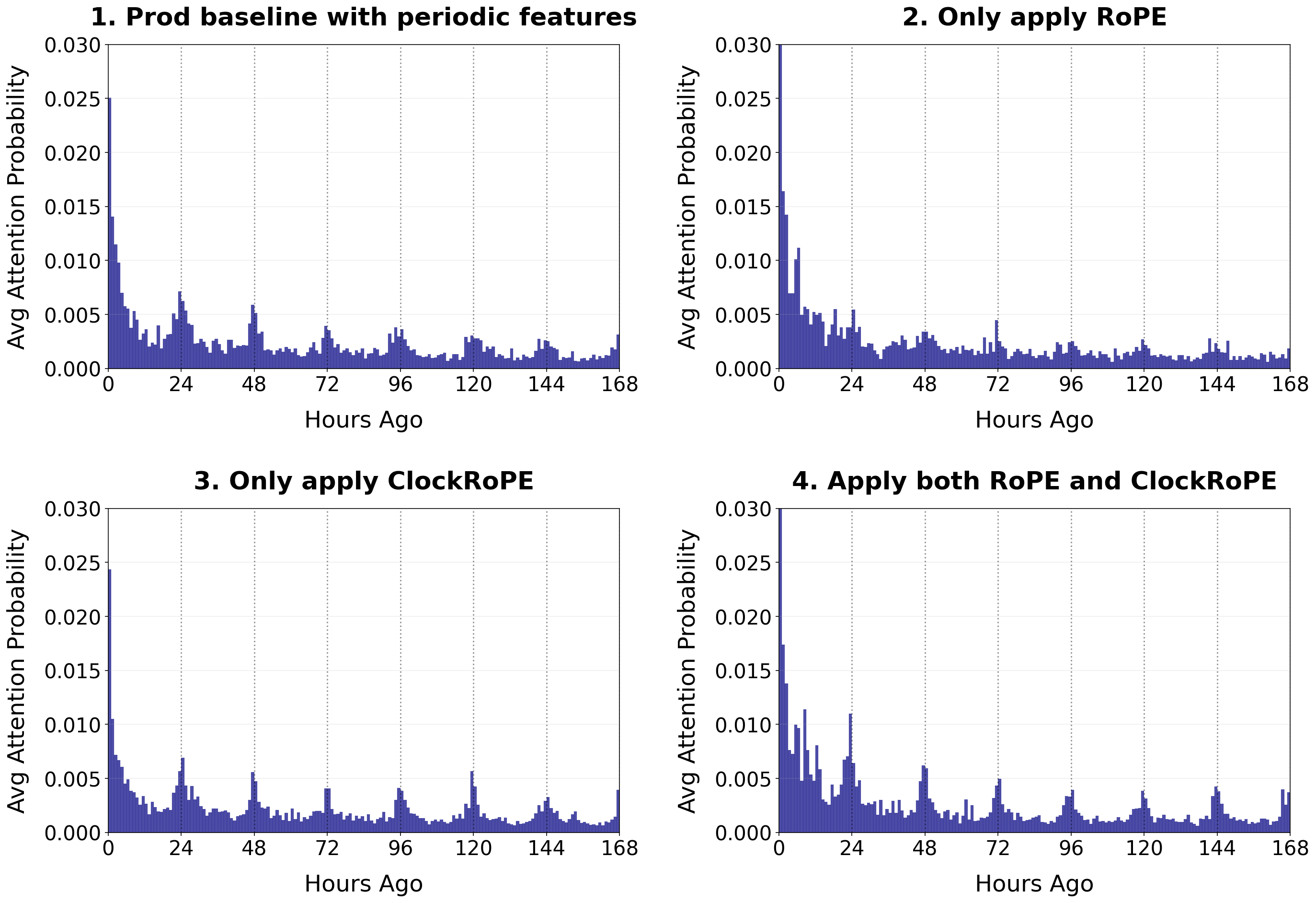}
    \caption{Visualization of average attention probability over a week. Key observations: (2) Standard RoPE allocates more attention to the most recent history  and obscures daily period. (3) Applying ClockRoPE alone sharpens attention peak around every 24 hrs. (4) The combination of RoPE and ClockRoPE retains both long-term decay and  clear daily period.   }
    \label{fig:attn}
\end{figure}

Figure \ref{fig:attn} shows the attention distribution over temporal distances ranging from 0 to 168hr, averaging attention probabilities for every hour bucket. In this comparison, we used ClockRoPE-gaussian-fold with daily period only.

Figure \ref{fig:attn} subgraph 2 confirms the motivation in \S\ref{sec:motivation}: the log-linear frequency schedule's long-term decay dominates the attention pattern, suppressing the periodic signal that ClockRoPE is designed to recover. Subgraph 4 visually verifies that both  long-term decay and clear daily period are retained with the combination of RoPE and ClockRoPE, which explains why the combination achieves the best online performance gain in \S\ref{sec:online}.

We do not highlight the difference between Gaussian-symmetric and Gaussian-fold here as they produce similar pattern at the hourly level.

\subsubsection{Symmetric vs. Folded ClockRoPE Frequency Distribution} 
\begin{figure}
    \centering
    \includegraphics[width=1.0\linewidth]{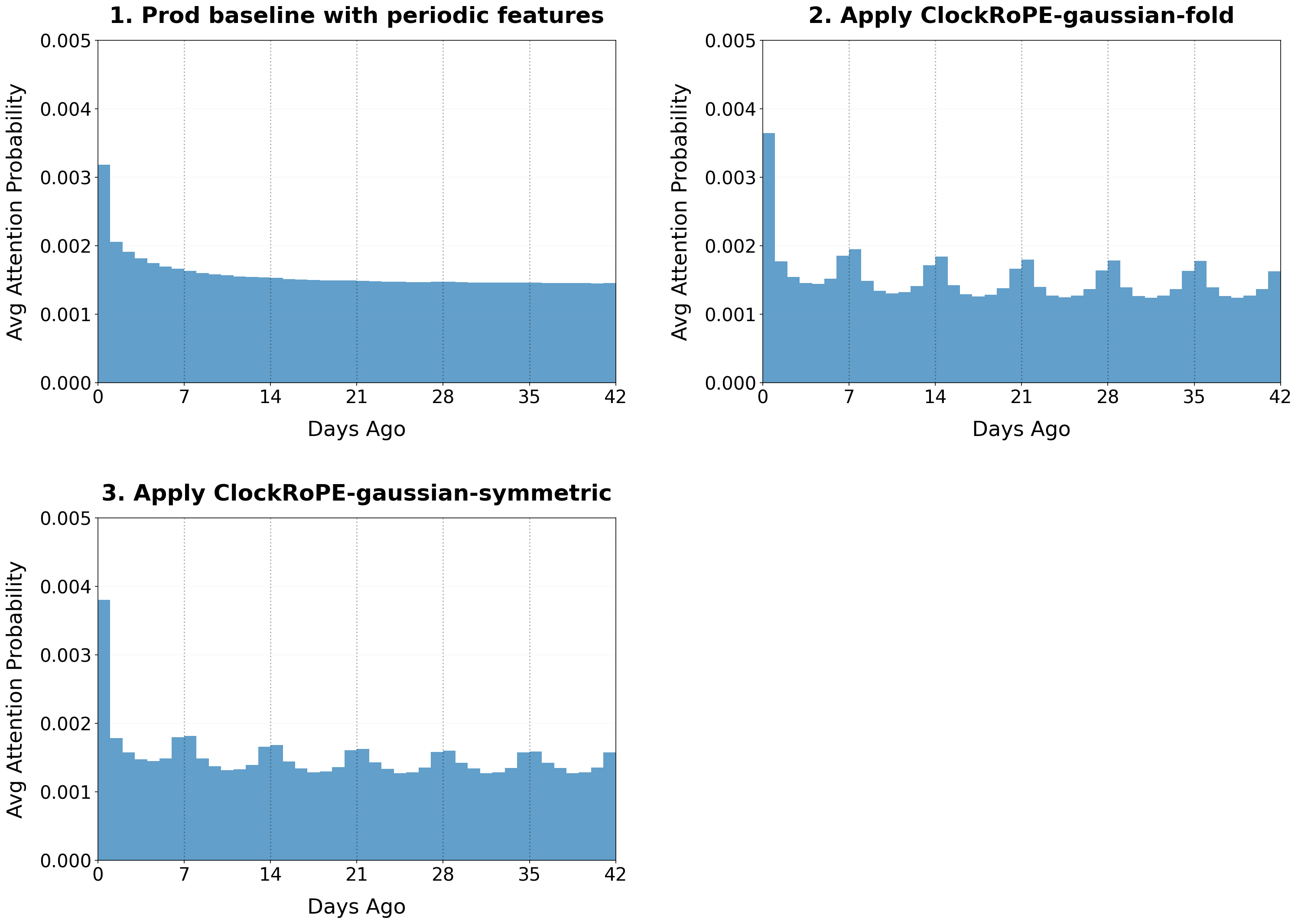}
    \caption{Visualization of average attention probability over six weeks. Key observations: (1) ClockRoPE introduces weekly attention peak that is not explicit in the baseline.  (2) With ClockRoPE-gaussian-fold, more attention is allocated to the day immediate before the weekly cycle mark than the day immediate after (3) The model with ClockRoPE-gaussian-symmetric allocates attention much more evenly between the two days surrounding the weekly cycle mark.}
    \label{fig:week}
\end{figure}

When averaging attention probabilities for every daily bucket as in Figure \ref{fig:week}, we start to observe difference between folded frequency distribution and its symmetric counterpart. As discussed in \S\ref{sec:fold}, the folded frequency distribution allows the model to distinguish between interactions one day before the weekly period mark and interactions one day after by performing ClockRoPE rotations in different directions. In contrast, interactions before and  after the weekly period mark result in the same rotations with ClockRoPE-gaussian-symmetric as long as their temporal distance to the weekly period mark is the same.  Figure \ref{fig:week} subgraph 2 suggests that with the advantage of being directional-aware, the model learns to allocate more attention to interactions before it than those after it when projecting all interactions onto the same weekly circle e.g. an interaction on Monday distributes more attention to interactions on the past Sunday (8 days ago) than those on the past Tuesday (6 days ago). This explains that a potential reason for the performance improvement of ClockRoPE-gaussian-fold over its symmetric counterpart is its capability of capturing routines with causal relationships.

\subsection{Production Deployment Results}

\begin{table}[htbp]
  \centering
  \caption{Core Deployment Metrics}
  \label{tab:deploy}
  \renewcommand{\arraystretch}{1.2} 
  \begin{tabular}{lr}
    \toprule
    \textbf{Core Deployment Metrics} & \textbf{Results} \\
    \midrule
   Valued Engagement Time & +0.08\% \\
   Homepage Triggered Engagement Time & +0.11\% \\
    Homepage Views & -0.51\% \\
     Daily Unique Impressed Videos & -0.19\% \\
     TPU Serving Cost & -0.63\% \\
   Serving Latency
 & Neutral \\
    \bottomrule
  \end{tabular}
\end{table}

\begin{table}[htbp]
  \centering
  \caption{Routine-related Metrics }
  \label{tab:online-results}
  \renewcommand{\arraystretch}{1.2} 
  \begin{tabular}{lr}
    \toprule
    \textbf{Routine-related Metrics} & \textbf{Online Lift} \\
    \midrule
   Engaged Vod Topics & +0.05\% \\
    Repeat Engaged Topics & +0.03\% \\
     Gaming-Related Content Valued Engagement & +0.08\% \\
     News Valued
 Engagement & +0.17\% \\
    Learning Content Valued Engagement
 & +0.15\% \\
    \bottomrule
  \end{tabular}
\end{table}

ClockRoPE has been successfully
deployed together with RoPE across homepage and watch page  at a major
video-sharing platform with neutral serving latency impact. As shown in Table \ref{tab:deploy}, we observe fewer daily unique impressed videos and homepage views, alongside more homepage triggered engagement time and overall valued engagement time, suggesting that it now takes users fewer scrolls and clicks to find satisfying contents, which in turn reduces serving volume and saves overall TPU serving cost by $0.63 \%$.

 In addition to the core deployment metrics, Table \ref{tab:online-results} demonstrates evidence that ClockRoPE helps improve sitewide valued engagement of routine-related watch behavior, such as learning, gaming and news watching,  tying back to our initial motivation of improving routine-awareness in sequential recommendation.

\section{Conclusion}

In this work, we introduced ClockRoPE, a mathematically grounded periodic encoding for modeling user routines. By leveraging the framework of Random Fourier Rotations, ClockRoPE achieves performance gain without requiring additional input features or increasing model dimensionality. Crucially, ClockRoPE serves as a powerful complement to standard RoPE, enabling transformers to capture natural behavioral periodicity alongside long-term attention decay. The successful deployment of ClockRoPE in a major video-sharing platform, yielding consistent valued engagement gains with resource savings, confirms its effectiveness and efficiency for large-scale industrial applications.

\clearpage

\bibliographystyle{ACM-Reference-Format}
\bibliography{main}

\appendix

\section{Proofs}

\subsection{Proof of Proposition 3.2}

\begin{proof}
Define the real-valued function:
\begin{align}
h(\xi_0, \ldots, \xi_{d/2-1}) &= \sum_{j=0}^{d/2-1} 
(\boldsymbol{R}_m^{(j)} \boldsymbol{q}_m^{(j)})^\top 
(\boldsymbol{R}_n^{(j)} \boldsymbol{k}_n^{(j)}) \notag \\
&= \sum_{j=0}^{d/2-1} \mathrm{Re}\left[\boldsymbol{q}_m^{(j)} 
{\boldsymbol{k}_n^{(j)}}^* e^{i2\pi\xi_j(p_m - p_n)}\right]
\end{align}

By Proposition 3.1, $\mathbb{E}[h] = \boldsymbol{q}_m^\top 
\boldsymbol{k}_n f(p_m - p_n)$.Changing $\xi_j$ to $\xi_j'$ while keeping all other 
frequencies fixed, the change in $h$ is bounded by:

\begin{align}
& |h(\ldots, \xi_j, \ldots) - 
h(\ldots, \xi_j', \ldots)| \notag \\
&=  \left|\mathrm{Re}\left[
\boldsymbol{q}_m^{(j)}{\boldsymbol{k}_n^{(j)}}^*
e^{i2\pi\xi_j(p_m-p_n)}\right] - \mathrm{Re}\left[
\boldsymbol{q}_m^{(j)}{\boldsymbol{k}_n^{(j)}}^*
e^{i2\pi\xi_j'(p_m-p_n)}\right]\right| \notag \\
&\leq \left|\boldsymbol{q}_m^{(j)}
{\boldsymbol{k}_n^{(j)}}^*\right| \cdot 
\left| e^{i2\pi\xi_j(p_m-p_n)} - e^{i2\pi\xi_j'(p_m-p_n)} \right| \notag \\
&\leq 2\left|\boldsymbol{q}_m^{(j)}{\boldsymbol{k}_n^{(j)}}^*
\right| \notag \\
&= 2\left|\boldsymbol{q}_m^{(j)}\right|\left|{\boldsymbol{k}_n^{(j)}}^*
\right| \notag \\
&= 2\left\|\boldsymbol{q}_m^{(j)}\right\| \left\|\boldsymbol{k}_n^{(j)}\right\|
\end{align}

 The last inequality follows from  the maximum distance between any two points on 
the unit circle is 2. The last equality comes from replacing complex norm with corresponding 2d vector norm.

Denote bound $c_j = 2\left\|\boldsymbol{q}_m^{(j)}\right\| \left\|\boldsymbol{k}_n^{(j)}\right\|$ and apply McDiarmid's inequality to $h$:

\begin{align}
&P\left(\left|\sum_{j=0}^{d/2-1}(\boldsymbol{R}_m^{(j)} 
\boldsymbol{q}_m^{(j)})^\top(\boldsymbol{R}_n^{(j)} 
\boldsymbol{k}_n^{(j)}) - \boldsymbol{q}_m^\top\boldsymbol{k}_n 
f(p_m-p_n)\right| \geq \epsilon\right) \notag \\
&\leq 2\exp\left(-\frac{2\epsilon^2}{\sum_{j=0}^{d/2-1} 
c_j^2}\right) \notag \\
&= 2\exp\left(-\frac{\epsilon^2}{2\sum_{j=0}^{d/2-1}
(\left\|\boldsymbol{q}_m^{(j)}\right\| \left\|\boldsymbol{k}_n^{(j)}\right\|)^2}
\right)
\end{align}

replacing $\epsilon$ with $\frac{d}{2}\epsilon'$ completes the proof.
\end{proof}
\subsection{Proof of RFR Exponential Convergence with Feature Dimension at Initialization}

 Let $X_j = \left\|\boldsymbol{q}_m^{(j)}\right\| \left\|\boldsymbol{k}_n^{(j)}\right\|$. When query and key projection matrices are initialized i.i.d. with variance independent of d, we have $\{X_j\}_{j=0}^{d/2-1}$ i.i.d. across $j$ at initialization and  $\mathbb{E}[X_j^2]$ remains constant as d scales. Applying the law of large numbers to $X_j^2$, we obtain
\[
\frac{1}{d/2}\sum_{j=0}^{d/2-1} X_j^2 \to \mathbb{E}[X_j^2],
\]
which implies $\sum_{j=0}^{d/2-1} X_j^2 = \Theta(d)$.
In Proposition 3.2, substituting $\sum_{j=0}^{d/2-1}
(\left\|\boldsymbol{q}_m^{(j)}\right\| \left\|\boldsymbol{k}_n^{(j)}\right\|)^2$ with $\Theta(d)$ yields that \[
\frac{1}{d/2}\sum_{j=0}^{d/2-1}
(\boldsymbol{R}_m^{(j)}\boldsymbol{q}_m^{(j)})^\top
(\boldsymbol{R}_n^{(j)}\boldsymbol{k}_n^{(j)})
\]
converges to
\[
\frac{1}{d/2}\boldsymbol{q}_m^\top\boldsymbol{k}_n
f(p_m-p_n)
\] exponentially fast in $d$.

\section{Can Rotation Frequencies Be Learnt?}
We conducted experiments that jointly optimize rotation frequencies with network weights. In Table \ref{tab:learn}, we compare learnable frequencies with 0 initialization and ClockRoPE initializations. We also list ClockRoPE variants with fixed frequencies for reference. We find that (1) using ClockRoPE initialization improves performance over 0 initialization and (2) trainable frequencies initialized with ClockRoPE yields similar performance as fixed frequencies.

\begin{table}[ht]
\centering
\footnotesize 
\caption{Percentage MAP improvement over prod control. Learnable frequncies initialized with ClockRoPE achieve similar performance as fixed frequencies.}
\label{tab:learn}
\setlength{\tabcolsep}{4pt} 
\begin{tabular}{lccc} 
\toprule
\textbf{Method} & \textbf{MAP@1} & \textbf{MAP@50}  \\
\midrule
Learnable frequencies initialized from 0.0           &  -1.44\% & -1.40\% \\

\midrule
ClockRoPE-cosine-fold         &
 +1.96\% & +1.37\% \\
\quad + Learnable frequencies &  +2.01\% & +1.45\% \\
\midrule
ClockRoPE-gaussian-fold            & +3.61\% & +2.25\% \\
\quad + Learnable frequencies &  +3.23\% & +2.16\% \\
\bottomrule
\end{tabular}
\end{table}

\end{document}